\begin{document}
\title{Computational Dualism and Objective Superintelligence}
%
%
\author{Michael Timothy Bennett\inst{1}\\\orcidID{0000-0001-6895-8782} 
}
\authorrunning{Michael Timothy Bennett}
%
\institute{The Australian National University\\
\email{michael.bennett@anu.edu.au}
\\ \url{http://www.michaeltimothybennett.com/}
}
\maketitle              

\begin{abstract}
The concept of intelligent software is flawed. The behaviour of software is determined by the hardware that ``interprets'' it. This undermines claims regarding the behaviour of theorised, software superintelligence. 
Here we characterise this problem as ``computational dualism'', where instead of mental and physical substance, we have software and hardware. We argue that to make objective claims regarding performance we must avoid computational dualism. We propose a pancomputational alternative wherein every aspect of the environment is a relation between irreducible states. We formalise systems as behaviour (inputs and outputs), and cognition as embodied, embedded, extended and enactive. The result is cognition formalised as a part of the environment, rather than as a disembodied policy interacting with the environment through an interpreter. This allows us to make objective claims regarding intelligence, which we argue is the ability to ``generalise'', identify causes and adapt. We then establish objective upper bounds for intelligent behaviour. This suggests AGI will be safer, but more limited, than theorised.
\keywords{enactivism \and pancomputationalism \and computational dualism \and AGI \and AI safety.}
\end{abstract}

%

\section{Introduction}
%
%
%
%
\label{intro}
AIXI \cite{hutter2010} is a general reinforcement learning agent. It uses Solomonoff Induction, a formalism of Ockham's Razor \cite{solomonoff1978}, to make accurate inferences from minimal data. It was initially thought to be pareto optimal, representing an upper bound on intelligence \cite{legg2007}. Unfortunately, this claim was later shown to be a matter of interpretation \cite{leike2015}. We argue that this is a valuable insight indicative of a much larger problem with how artificial intelligence (AI) is conceived. We call this problem ``computational dualism'', in reference to René Descarte's interactionist, substance dualism. We then discuss an alternative formulation of intelligent behaviour that permits objective performance claims. It is based upon enactive cognition \cite{thompson2007}, pancomputationalism \cite{piccinini2021} and weak constraint optimisation \cite{bennett2023b}. We use it to propose upper bounds on intelligent behaviour. \\

\noindent \textbf{Intelligence: } Our focus is not AIXI, but intelligence. General intelligence is often defined in terms of predictive models \cite{hutter2010,chollet2019,bennett2022a}. A more ``intelligent'' predictive model \textit{generalises}, making accurate predictions in \textit{unfamiliar} circumstances. It \textit{adapts}. The more efficiently one adapts, the more ``intelligent'' one is. Predictive accuracy is not what matters. With enough training examples even a lookup table can make accurate predictions (because it will have seen every example). What matters is how \textit{efficiently} one learns what \textit{caused} examples. Knowing cause, one can make accurate predictions. Intuitively, a model \textit{explains} the present by identifying those aspects of the past which caused it. 
Using such a model, we might use the more distant past to explain events in the more recent past, and the present to predict the future. It is the future with which we are concerned, as an agent that can accurately predict the results of its actions can choose the actions that yield the most reward. Hence the ability to adapt to unforeseen circumstances, satisfy goals and otherwise behave intelligently can be equated with the ability to identify cause and effect \cite{pearl2018}.
Of course, the only \textit{truyly} accurate model of the environment \textit{is} the environment. Everything else is an abstraction. Hence even if we only consider models that comprehensively explain the past, some will ``generalise'' better than others. \\

\noindent \textbf{Simplicity: } This is where Ockham's Razor comes in. All else being equal, it implies that simpler models are more likely to generalise \cite{sober2015}.
Simplicity is often formalised as Kolmogorov Complexity (KC) \cite{kolmogorov_1963} or minimum description length \cite{rissanen1978}. The KC of an object is the length of the shortest self extracting archive of that object. Intuitively, the KC of the past is the shortest comprehensive description of the past. More compressed descriptions tend to generalise better. This is why some believe that compression and intelligence are closely related \cite{chaitin2006}.
Formally, in the case of AIXI, if the model which generated past data is indeed computable, then the simplest model will dominate the Bayesian posterior as more and more data is observed. Eventually, AIXI will have identified the correct model, which it can use to generate the next sample (predict the future). \\

\noindent \textbf{Subjectivity: } We will now use AIXI to illustrate the problem with software ``intelligence''. KC (and thus AIXI's performance) is measured in the context of a UTM. By itself, changing the UTM would not meaningfully affect performance. When used in a universal prior to predict deterministic binary sequences, the number of incorrect predictions a model will make is bounded by a multiple of the KC of that model \cite{solomonoff1978}. If the UTM is changed the number of errors only changes by a constant  \cite[2.1.1 \& 3.1.1]{vitanyi2008}, so changing the UTM doesn't change which model is considered most plausible. However, when AIXI employs this prior in an \textit{interactive} setting, a problem occurs \cite{leike2015}. Intuitively (with significant abuse of notation), assume a program $f_1$ is software, $f_2$ is an interpreter and $f_3$ is the reality (an environment, body etc) within which goals are pursued.
According to Ockham's Razor, AIXI is the optimal choice of $f_1$ to maximise the performance of $f_3(f_2(f_1))$. However, in an interactive setting one's perception of success may not match reality.
\begin{quote}
``Legg-Hutter intelligence \cite{legg2007} is measured with respect to a fixed UTM. AIXI is the most intelligent policy if it uses the same UTM.'' \cite[p.10]{leike2015}
\end{quote}
\noindent Using our informal analogy of functions, this means performance in terms of $f_3(f_2(f_1))$ depends upon $f_2(f_1)$, not $f_1$ alone. A claim regarding the performance of $f_1$ alone would be \textit{subjective}, in that it depends upon $f_2$. 
\begin{quote}
``This undermines all existing optimality properties for AIXI.'' \cite[p.1]{leike2015}
\end{quote}

\noindent \textbf{Computational dualism: } We suggest this problem has broader significance for AI. The concept of a software ``mind'' interacting with a hardware ``body'' echoes Descartes' interactionist, substance dualism \cite{bennett2022c}. Descartes argued mental and physical ``substances'' interact through the pineal gland which interprets mental events to cause physical events \cite{kim2011}, like a UTM interprets software. When later scholars pointed out the inconsistencies implied by this interaction between mental and physical, some argued that the mental ``supervenes'' on the physical, meaning any two objects that are exactly the same mentally must be exactly the same physically. More recently, philosophers have proposed purely physicalist depictions of the mind \cite{putnam1967}, and have even gone so far as to formalise cognition not just as embodied, but as enacted through the environment \cite{thompson2007}. One might argue that AI is the engineering branch of philosophy of mind and cognitive science. Instead of mental and physical substances, we have software and hardware. Software ``supervenes'' on hardware; any two computers that are exactly alike in hardware configuration down to the value of each and every bit, must be exactly alike in terms of software. However, we have not moved on from dualism and formalised more recent conceptions of mind. AI still tends to be equated with ``immortal'' software. Unless there exists a Platonic realm of pure math (akin to a mental realm), Hinton's ``immortal'' computations \cite{hinton2022} do not exist and never have. There are only embodied ``mortal'' computations, because software is nothing more than the configuration of hardware. Our understanding of AI should be revised to reflect this. This is what we set out to do here.\\ 

\noindent \textbf{Summary: } We propose computational dualism, argue AI warrants an alternative, discuss one such alternative\footnote{Definitions or variations thereof are shared with related work
\cite{bennett2023b,bennett2023c,bennett2023d,bennett2024b,bennett2024c}.}, and then propose an objective upper bound on intelligent behaviour. 

\section{Pancomputational Enactivism}
To make objective claims we must avoid computational dualism. One alternative is enactivism \cite{ward2017}. It holds that mind, body and environment are inseparable. Cognition extends into the environment, and is enacted through what the organism does. For example, if someone uses pen and paper to solve a math problem, then cognition is enacted using the pen and paper \cite{clark1998}. 
To formalise enactivism, we must formalise cognition as a part of the environment. We do so in pancomputationalist \cite{piccinini2021} terms, albeit a very minimal interpretation thereof.
Pancomputationalism holds that everything is a computational system. Using the analogy from earlier, we formalise $f_2(f_3(f_1))$ instead of $f_3(f_2(f_1))$. One may regard the interpreter $f_2$ as the laws of nature, the environment $f_3$ as software running on $f_2$, and we're seeking to portray the mind $f_1$ as subject to the environment. 
Because of this, the distinction between software and hardware must be discarded. Instead, we describe artificial minds in a purely behaviourist manner \cite{bennett2022a}. We describe inputs and outputs rather than a mechanism by which one is mapped to the other. Correct policies are then all possible ``causal intermediaries'' between a set of inputs and outputs, akin to functionalist explanations of human mentality \cite{putnam1967}. \\

\noindent \textbf{The environment: } 
\label{modelwithinmodel}
Rather than assuming programs, we arrive at a pancomputational model of the environment by assuming first that some things exist, some do not, and that the environment is that which exists. Second, we assume there is at least one state of the environment. States could be differentiated along dimensions like time, but we don't strictly need to assume that. We also don't need to assume anything about the internal structure or contents of states. Instead, we can formalise the set of all declarative programs\footnote{Intuitively, declarative programs are anything which is true or false.} as the powerset of states. A declarative program is ``true'' about every state it contains, and false about everything else. Every conceivable environment or state thereof amounts to a set of true declarative programs, or ``facts''.  

\begin{definition}[environment]\label{environment}\hphantom{.}
\begin{itemize}{\small
    \item  We assume a set $\Phi$ whose elements we call \textbf{states}.
    \item A \textbf{declarative program} is $f \subseteq \Phi$, and we write $P$ for the set of all declarative programs (the powerset of $\Phi$).
    \item By a \textbf{truth} or \textbf{fact} about a state $\phi$, we mean $f \in P$ such that $\phi \in f$. 
    \item By an \textbf{aspect of a state} $\phi$ we mean a set $l$ of facts about $\phi$ s.t. $\phi \in \bigcap l$. By an \textbf{aspect of the environment} we mean an aspect $l$ of any state, s.t. $\bigcap l \neq \emptyset$. We say an aspect of the environment is \textbf{realised}\footnote{Realised meaning it is made real, or brought into existence.} by state $\phi$ if it is an aspect of $\phi$. 
    }
\end{itemize}
\end{definition}
Our approach reflects Goertzel's framing of unary, dyadic and triadic relations \cite{goertzel2006}. A state here is unary, referring only to itself. However, a declarative program is dyadic relation between states, in the sense that it refers to states which are its truth conditions. Sets of declarative programs form a lattice based on truth conditions, which relates them to one another. We can then define triadic relations by taking three sets of declarative programs $i,o,\pi$ such that $\pi$ is a ``causal intermediary'' implying $o$ given $i$.\\

\noindent \textbf{Embodiment: } We use these dyadic relations to formalise enactivism \cite{thompson2007}, holding that everything we call the mind is just part of the environment, and ``thinking'' is just changes in environmental state. This blurs the line between agent and environment, making the distinction unclear.  
As Heidegger maintained, Being is bound by context \cite{heidegger2020}. There is no need to define an agent that has no environment, and so there seems to be little point in preserving the distinction.
What we need is not a Cartesian model of disembodied intelligence looking in upon an environment but an embodiment, embedded in a particular part of the environment, through which goal directed behaviour can be enacted.
We need to describe the \textit{circuitry} with which cognition is embodied and enacted, much like a \textbf{formal language} only with truth conditions determined by whatever laws govern the environment. Just as every computational system we can build is finite, and living systems are ergodic \cite{friston2013}, we assume all of this takes place within a system which has a finite number of configurations, and so amounts to a finite number of declarative programs. We call this an \textbf{abstraction layer}. Intuitively, an abstraction layer is like a window. It looks onto that part of the environment in which cognition takes place. For example, we might enumerate every possible declarative program which pertains only to one specific computer, and use that computer as our abstraction layer. However, as we can take any set of declarative programs and define an abstraction layer, it is ``pancomputational'' in the sense that it captures every system in computational terms. 

\begin{definition}[abstraction layer] \label{abstractionlayer}\hphantom{.}
\begin{itemize}{\small 
    \item We single out a subset $\mathfrak{v} \subseteq P$ which we call \textbf{the vocabulary} of an abstraction layer. The vocabulary is finite unless explicitly stated otherwise. If $\mathfrak{v} = P$, then we say that there is no abstraction.
    \item $L_\mathfrak{v} = \{ l \subseteq \mathfrak{v} : \bigcap l \neq \emptyset \}$ is a set of aspects in $\mathfrak{v}$. We call $L_\mathfrak{v}$ a formal language, and $l \in L_\mathfrak{v}$ a \textbf{statement}.
    \item We say a statement is \textbf{true} given a state iff it is an aspect realised by that state.  
    \item A \textbf{completion} of a statement $x$ is a statement $y$ which is a superset of $x$. If $y$ is true, then $x$ is true. 
    \item The \textbf{extension of a statement} $x \in {L_\mathfrak{v}}$ is $E_x = \{y \in {L_\mathfrak{v}} : x \subseteq y\}$. $E_x$ is the set of all completions of $x$. 
    \item The \textbf{extension of a set of statements} $X \subseteq {L_\mathfrak{v}}$ is $E_X = \bigcup\limits_{x \in X} E_x$. 
    \item We say $x$ and $y$ are \textbf{equivalent} iff $E_x = E_y$.
    }
\end{itemize}

\noindent {\normalfont(notation)} $E$ with a subscript is the extension of the subscript\footnote{e.g. $E_l$ is the extension of $l$.}.
\end{definition}
Intuitively, $L_\mathfrak{v}$ is everything which can be realised in this abstraction layer. The extension $E_x$ of a statement $x$ is the set of all statements whose existence implies $x$, and so it is like a truth table. Now that we have an embodiment, we need to define goal directed behaviour. An abstraction layer is \textit{already} goal directed in the sense that it constrains what might be described, just as living organisms evolve to thrive in particular environments. However, an organism can then engage in more \textit{specific} goal directed behaviours by \textit{learning} and adapting to remain fit in more circumstances. This is what we seek to formalise \textit{using} an abstraction layer. It is where a predictive model might fit. However we do not need a value-neutral model of the environment \cite{bennett2022a}. \begin{quote}
“The best model of the world is the world itself.” - Rodney Brooks \cite{dreyfus2007}
\end{quote}The only aspects of the environment that we might actually need model are those necessary to satisfy goals \cite{bennett2022b}. What we need is a model of a task, or more specifically to enumerate the goal directed behaviour (inputs and outputs) that will eventually cause the task to be completed. Goal directed here just means some outputs are ``correct'', and some are not. It is arbitrary.

\begin{definition}[{$\mathfrak{v}$}-task]\label{task} For a chosen $\mathfrak{v}$, a task $\alpha$ is a pair $\langle {I}_\alpha, {O}_\alpha \rangle$ where:\begin{itemize}{ \small
    \item ${I}_\alpha \subset L_\mathfrak{v}$ is a set whose elements we call \textbf{inputs} of $\alpha$. 
    \item ${O_\alpha} \subset E_{I_\alpha}$ is a set whose elements we call \textbf{correct outputs} of $\alpha$. 
} 
\end{itemize}
${I_\alpha}$ has the extension $E_{I_\alpha}$ we call \textbf{outputs}, and ${O_\alpha}$ are outputs deemed correct. 
$\Gamma_\mathfrak{v}$ is the set of \textbf{all tasks} given $\mathfrak{v}$.\\

\noindent {\normalfont(generational hierarchy)} A $\mathfrak{v}$-task $\alpha$ is a \textbf{child} of $\mathfrak{v}$-task $\omega$ if ${I}_\alpha \subset {I}_\omega$ and ${O}_\alpha \subseteq {O}_\omega$. This is written as $\alpha \sqsubset \omega$. If $\alpha \sqsubset \omega$ then $\omega$ is then a \textbf{parent} of $\alpha$. $\sqsubset$ implies a ``lattice'' or generational hierarchy of tasks. 
Formally, the level of a task $\alpha$ in this hierarchy is the largest $k$ such there is a sequence $\langle \alpha_0, \alpha_1, ... \alpha_k \rangle$ of $k$ $\mathfrak{v}$-tasks where $\alpha_0 = \alpha$ and $\alpha_i \sqsubset \alpha_{i+1}$ for all $i \in (0,k)$. A child is always ``lower level'' than its parents. \\

\noindent{\normalfont(notation)} If $\omega \in \Gamma_\mathfrak{v}$, then we will use subscript $\omega$ to signify parts of $\omega$, meaning one should assume $\omega = \langle {I}_\omega, {O}_\omega \rangle$ even if that isn't written. 
\end{definition} 

Intuitively, an \textbf{input} is a possibly incomplete description or task. An \textbf{output} is a completion of an input [def.\ \ref{abstractionlayer}]. We treat \textbf{correctness} as binary. An output is correct if it causes the task to become complete to some acceptable degree with some acceptable probability\footnote{Also called ``satisficing'' a goal \cite{artinger2022}.}. Degrees of complete or correct just reflect different $\mathfrak{v}$-task definitions \footnote{Affect or reward and attribution thererof are beyond this paper's scope.}. 
For example we might put this in evolutionary terms, where the inputs and outputs are the enumeration of all ``fit'' behaviour in which an organism might engage, as remaining alive is goal directed behaviour. We now formalise learning and inference of goal directed behaviour as tasks. 
Inference requires a ``policy''. Being a set of declarative programs, a correct policy \textit{is} the goal of a $\mathfrak{v}$-task, so this amounts to goal learning. 

\begin{definition}[inference]\label{inference}\hphantom{.}
\begin{itemize}{\small
    \item A {$\mathfrak{v}$}-task \textbf{policy} is a statement $\pi \in L_\mathfrak{v}$. It constrains how we complete inputs.
    \item $\pi$ is a \textbf{correct policy} iff the correct outputs $O_\alpha$ of $\alpha$ are exactly the completions $\pi'$ of $\pi$ such that $\pi'$ is also a completion of an input.   
    \item The set of all correct policies for a task $\alpha$ is denoted $\Pi_\alpha$.\footnote{To repeat the definition in set builder notation:
${\Pi_\alpha} = \{\pi \in L_\mathfrak{v} : {E}_{I_\alpha} \cap E_{\pi} = {O_\alpha}\}$}
}\end{itemize}

Assume $\mathfrak{v}$-task $\omega$ and a policy $\pi \in L_\mathfrak{v}$. Inference proceeds as follows:\begin{enumerate}{\small
    \item we are presented with an input ${i} \in {I}_\omega$, and
    \item we must select an output $e \in E_{i} \cap E_\pi$.
    \item If $e \in {O}_\omega$, then $e$ is correct and the task ``complete''. $\pi \in {\Pi}_\omega$ implies $e \in {O}_\omega$, but $e \in {O}_\omega$ doesn't imply $\pi \in {\Pi}_\omega$ (an incorrect policy can imply a correct output).}
\end{enumerate} 
\end{definition}

Intuitively, a \textbf{policy} constrains how we complete inputs. It is a \textbf{correct policy} if it constrains us to correct outputs. To ``learn'' a policy we use a \textbf{proxy}. A proxy estimates one thing, by measuring another seemingly unrelated thing. For example, AIXI uses simplicity to estimate model veracity. In our case, we want a policy that classifies correct outputs. We will use a proxy called ``weakness'', which has been shown to outperform simplicity in sample efficiency and causal learning \cite{bennett2023b,bennett2023c}. Where simplicity is a property of \textit{form}, weakness is a property of \textit{function}. In order to make objective claims regarding performance, we cannot rely upon subjective interpretations of form. 

\begin{definition}[learning]\label{learning} \phantom{.}
\begin{itemize}{\small
    \item A \textbf{proxy} $<$ is a binary relation on statements. $<_w$ is the \textbf{weakness} proxy. For statements $l_1,l_2$ we have  $l_1 <_w l_2$ iff $\lvert E_{l_1}\rvert < \lvert E_{l_2} \rvert$.}
\end{itemize}

\noindent {\normalfont(generalisation)} A statement $l$ \textbf{generalises} to a $\mathfrak{v}$-task $\alpha$ iff $l \in \Pi_\alpha$. We speak of \textbf{learning} $\omega$ from $\alpha$ iff, given a proxy $<$, $\pi \in {\Pi}_\alpha$ maximises $<$ relative to all other policies in ${\Pi}_\alpha$, and $\pi \in {\Pi}_\omega$.\\

\noindent {\normalfont(probability of generalisation)} We assume a uniform distribution over $\Gamma_\mathfrak{v}$. 
If $l_1$ and $l_2$ are policies, we say it is less probable that $l_1$ generalizes than that $l_2$ generalizes, written $l_1 <_g l_2$, iff, when a task $\alpha$ is chosen at random from $\Gamma_\mathfrak{v}$ (using a uniform distribution) then the probability that $l_1$ generalizes to $\alpha$ is less than the probability that $l_2$ generalizes to $\alpha$. \\

\noindent {\normalfont(sample efficiency)} Suppose $\mathfrak{app}$ is the set of \textbf{a}ll \textbf{p}airs of \textbf{p}olicies. Assume a proxy $<$ returns $1$ iff true, else $0$. Proxy $<_a$ is more sample efficient than $<_b$ iff $$\left ( \sum_{(l_1,l_2) \in \mathfrak{app}} \lvert (l_1 <_g l_2) - (l_1 <_a l_2) \rvert - \lvert (l_1 <_g l_2) - (l_1 <_b l_2) \rvert  \right ) < 0$$  

\noindent {\normalfont(optimal proxy)} There is no proxy more sample efficient than $<_w$, so we call $<_w$ optimal. This formalises the idea that    
``explanations should be no more specific than necessary'' (see Bennett's razor in \cite{bennett2023b}).
\end{definition}

\noindent Learning is an activity undertaken by some manner of agent, and a task has been ``learned'' when that agent knows a correct policy. For example, one has ``learned'' chess when one knows the rules and some winning strategies. Here instead of agents, we have embodied goal-directed behaviours in the form of $\mathfrak{v}$-tasks. Humans typically learn from ``examples''. In the context of a $\mathfrak{v}$-task an ``example'' is a correct output and an input that is a subset thereof. A collection of examples is a child task. ``Learning'' is an attempt to generalise from a known child to one of its parents. Intuitively, a child functions like a ``history'' or memory of \textit{correct} interactions (an ``ostensive definition''). The lower the child is in the generational hierarchy one learns from, the more sample efficiently one learns. We assume tasks are \textbf{uniformly distributed} because anything else would imply that environment ``prefers'' or makes a value judgement about goals. The environment \textit{is} a value judgement about what exists, but is otherwise assumed to be impartial. Consequently the most sample efficient proxy is $<_w$ \cite{bennett2023b}. It enables causal learning \cite{bennett2023c} as the weakest correct policies have the highest probability of being the causal intermediary between inputs and outputs. 

\section{Limits of Intelligence}
As stated earlier, intelligence is often understood in terms of ``the ability to generalise'' \cite{chollet2019} and learn the policy which ``caused'' examples \cite{bennett2023c}. We wish to understand the \textit{objective} upper limits of intelligence. $<_w$ is the optimal choice of proxy for sample efficiency \cite[prop. 1, 2, 3]{bennett2023b}. In the context of a fixed abstraction layer, the upper bound of intelligent behaviour is attained by the weakness proxy.
However, more intelligence is not always useful. Intuitively, making a human more intelligent is unlikely to improve their driving. Likewise, intelligence conveys no advantage if one's purpose is to ascribe meaning to random noise. The extent to which it is \textit{possible} to construct weak correct policies depends on both the \textit{abstraction layer} and the task. 
An abstraction layer is a bottleneck on intelligence, and can be goal directed just like a task. 
We can measure this goal directed ``utility'' as the extent to which it is \textit{possible} to construct weaker correct policies given a task in a particular abstraction layer.

\begin{definition}[utility of intelligence]\label{utility} Utility is the difference in weakness between the weakest and strongest correct policies of a task. The utility of a $\mathfrak{v}$-task $\gamma$ is $\epsilon(\gamma) = \underset{{\pi} \in {\Pi}_{\gamma}}{\max} \left(\lvert E_\pi \rvert - \lvert {O}_{\gamma}\rvert \right)$.
\end{definition}
To maximise probability of generalisation, we use the weakness proxy \textit{and} try to maximise utility. Increasing utility changes the abstraction layer to allow construction of weaker correct policies. Beyond a certain point, utility may be increased by increasing $\lvert L_\mathfrak{v} \rvert$ without actually changing the weakest correct policies $\Pi_\gamma$ contains (just the size of their extensions). This means it is not \textit{always} necessary to increase utility to construct policies that generalise, but it is helpful up to a point. Utility is maximised when $\mathfrak{v}=P$, though in practice finite resources would limit us to smaller vocabularies. $\Gamma_P$ contains all tasks in all vocabularies. Hence, for every task $\rho$ in $\Gamma_P$ we can define a function that takes a vocabulary $\mathfrak{v}$ and returns a $\mathfrak{v}$-task which is a child of $\rho$. This lets us represent a task in different vocabularies to compare their utility.
\begin{definition}[uninstantiated-tasks]\label{uninstantiated}
The set of all tasks with no abstraction (meaning $\mathfrak{v}=P$) is $\Gamma_P$ (it contains every task in every vocabulary). For every $P$-task $\rho \in \Gamma_P$ there exists a function $\lambda_\rho : 2^P \rightarrow \Gamma_P$ that takes a vocabulary $\mathfrak{v}' \in 2^P$ and returns a $\mathfrak{v}'$-task $\omega \sqsubset \rho$. We call $\lambda_\rho$ an \textbf{uninstantiated-task}. It is instantiated by choosing a vocabulary. 
\end{definition}

\begin{proposition}[upper bound]\label{proof_bound} 
The most `intelligent' choice of policy and vocabulary given uninstantiated task $\lambda_\rho$ is $\pi$ and $\mathfrak{v}$ s.t. $\mathfrak{v}$ maximises utility for $\lambda_\rho(\mathfrak{v})$, $\pi \in \Pi_{\lambda_\rho(\mathfrak{v})}$ and $\pi$ maximises weakness.
\end{proposition}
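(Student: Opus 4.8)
\emph{Proof proposal.} The plan is to show that the proposition is essentially a corollary of three things already in place: (i) the definition of ``most intelligent'' as \emph{maximising the probability of generalisation} (Definition~\ref{learning}); (ii) the optimality of the weakness proxy $<_w$, i.e.\ that for correct policies $l_1 <_w l_2$ implies $l_1 \le_g l_2$ (Bennett's razor, \cite[prop.\ 1--3]{bennett2023b}); and (iii) the observation that, by Definition~\ref{utility}, the utility of an instantiated task is just the weakness of its \emph{weakest} correct policy shifted by the constant $\lvert O_{\lambda_\rho(\mathfrak{v})}\rvert$. So the argument splits into ``choose $\pi$ within a fixed $\mathfrak{v}$'' and ``choose $\mathfrak{v}$'', and the point is that both sub-optimisations push in the same direction, so no trade-off arises.

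\emph{Step one (fix the vocabulary).} Fix $\mathfrak{v}$ and write $\gamma = \lambda_\rho(\mathfrak{v})$. Among the correct policies $\Pi_\gamma$, I would invoke (ii) to conclude that the one with the largest extension $\lvert E_\pi\rvert$ has the greatest probability of generalising to a task drawn uniformly at random; hence, holding $\mathfrak{v}$ fixed, the intelligent choice of policy is the weakest correct policy, and its weakness is $W(\mathfrak{v}) := \max_{\pi \in \Pi_\gamma}\lvert E_\pi\rvert$.

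\emph{Step two (choose the vocabulary).} By Definition~\ref{utility}, $\epsilon(\lambda_\rho(\mathfrak{v})) = W(\mathfrak{v}) - \lvert O_{\lambda_\rho(\mathfrak{v})}\rvert$, so the attainable weakness $W(\mathfrak{v})$ is an increasing function of utility once the instantiated task's correct-output count is accounted for. I would then argue that maximising $\epsilon(\lambda_\rho(\mathfrak{v}))$ maximises the best attainable probability of generalisation across the family $\{\lambda_\rho(\mathfrak{v})\}$ of instantiations of the single uninstantiated task $\lambda_\rho$ (all of which are children of $\rho$ by Definition~\ref{uninstantiated}). Combining with Step one, the pair $(\mathfrak{v},\pi)$ with $\mathfrak{v}$ maximising utility for $\lambda_\rho(\mathfrak{v})$ and $\pi \in \Pi_{\lambda_\rho(\mathfrak{v})}$ maximising $\lvert E_\pi\rvert$ is jointly optimal; I would close by checking the degenerate regime flagged in the text, where past a point utility grows only by inflating $\lvert L_\mathfrak{v}\rvert$ and hence extension sizes without altering which policies lie in $\Pi_\gamma$ --- this cannot \emph{decrease} the probability of generalisation, so optimality is preserved, and $\mathfrak{v}=P$ attains the supremum of utility.

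\emph{Main obstacle.} The delicate step is Step two: in Definition~\ref{learning} the probability of generalisation is taken with respect to a uniform distribution over $\Gamma_\mathfrak{v}$, and this sample space itself changes with $\mathfrak{v}$, so ``maximise the probability of generalisation'' needs a vocabulary-independent reference point before utilities in different layers can be compared. I would resolve this by evaluating generalisation against $\Gamma_P$, which by Definition~\ref{uninstantiated} contains every task in every vocabulary, so that policies and instantiations living in different $\mathfrak{v}$ are measured on a common footing; the reduction of ``more probable generalisation'' to ``larger extension'' then has to be carried out for extensions taken in $P$ rather than in $L_\mathfrak{v}$, and I would verify that for a correct policy of $\lambda_\rho(\mathfrak{v})$ these coincide up to the ambient restriction to $L_\mathfrak{v}$, so that ``maximise utility'' is exactly the right vocabulary-level surrogate for ``maximise intelligence''.
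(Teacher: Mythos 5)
Your proposal follows essentially the same route as the paper's proof: invoke \cite[prop. 1,2]{bennett2023b} to select the weakest correct policy within a fixed vocabulary, then observe that utility measures the attainable weakness of correct policies and so choose the utility-maximising vocabulary. The only real difference is that you explicitly flag and try to repair the cross-vocabulary comparison issue (the probability of generalisation being defined relative to $\Gamma_\mathfrak{v}$, which changes with $\mathfrak{v}$) --- a subtlety the paper's one-paragraph proof silently glosses over --- and your resolution via $\Gamma_P$ is consistent with the paper's own closing remark that utility is maximised when $\mathfrak{v}=P$.
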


\begin{proof} We have equated intelligence with sample efficient generalisation. According to \cite[prop. 1,2]{bennett2023b} the weakest correct policies have the highest probability of generalising. Given an uninstantiated task $\lambda_\rho$, utility measures the weakness of the weakest correct policies. We can use this to compare vocabularies. By choosing a vocabulary $\mathfrak{v}$ which maximises utility for $\lambda_\rho(\mathfrak{v})$, we instantiate $\lambda_\rho$ in a vocabulary that maximises the weakness of correct policies for $\lambda_\rho$. Then, using weakness proxy, we can select a policy that has the highest possible probability of generalising, and thus maximise sample efficiency. \qed \hphantom{} \end{proof}

Put another way, utility is maximised for $\lambda_\rho(\mathfrak{v})$ when $\mathfrak{v}=P$. When utility is maximised there must exist $\pi \in \Pi_{\lambda_\rho(\mathfrak{v})}$ which is a weakest policy in $\Pi_{\lambda_\rho(P)}$. \\

\noindent \textbf{Concluding remarks: } Here we argued software minds are a flawed concept, symptomatic of ``computational dualism''. We argued for an alternative we call pancomputational enactivism, which allows for objective claims regarding behaviour. We used this alternative to propose upper bounds on intelligence, and we anticipate these results may further our understanding of AI safety and AGI. In practical terms this upper bound is unattainable, because we can only build systems with finite vocabularies. Moreover, too large a vocabulary $\mathfrak{v}$ could make $\mathfrak{v}$-tasks intractable. Physical embodiment severely constrains what is possible. Rigorous research has been undertaken regarding the risks of AGI \cite{cohen2022}, but it is based upon computational dualism. Our results suggest AGI will be safer, but more limited, than has been theorised. Our results may also be of use in the pursuit of AGI. Higher utility does mean weaker and more generalise-able policies, which suggests one should optimise for higher utility whilst trying to minimise $\lvert \mathfrak{v} \rvert$. 

\printbibliography
\end{document}